\pgfplotsset{compat=1.16}
\DeclareMathOperator*{\argmin}{arg\,min}
\definecolor{tab20darkblue}{HTML}{4e79a7}
\definecolor{tab20darkgreen}{HTML}{59a14f}
\definecolor{tab20darkred}{HTML}{e15759}
\definecolor{tab20darkorange}{HTML}{f28e2b}
\definecolor{tab20darkturquoise}{HTML}{499894}
\definecolor{tab20darkgray}{HTML}{79706e}
\definecolor{tab20darkbrown}{HTML}{9d7660}
\definecolor{tab20darkpurple}{HTML}{b07aa1}
\definecolor{tab20darkpink}{HTML}{e377c2}
\definecolor{tab20lightgreen}{HTML}{8cd17d}
\definecolor{tab20lightblue}{HTML}{a0cbd8}
\definecolor{tab20lightorange}{HTML}{ffbb78}
\definecolor{tab20lightred}{HTML}{ff9896}
\definecolor{tab20lightpurple}{HTML}{c5b0d5}
\definecolor{tab20lightbrown}{HTML}{c49c94}
\definecolor{tab20lightpink}{HTML}{f7b6d2}
\definecolor{tab20lightgray}{HTML}{c7c7c7}
\definecolor{tab20lightturquoise}{HTML}{9edae5}
\definecolor{tab20lightyellow}{HTML}{dbdb8d}
\definecolor{tab20darkyellow}{HTML}{bcbd22}
\begin{document}

\title[Explaining \texorpdfstring{$\mathcal{ELH}$}{ELH} Concept Descriptions through Counterfactual Reasoning]{Explaining \texorpdfstring{$\mathcal{ELH}$}{ELH} Concept Descriptions \texorpdfstring{\\}{} through Counterfactual Reasoning}

\author{Leonie Nora Sieger}
\affiliation{%
  \institution{Data Science Research Group}
  \streetaddress{}
  \city{Paderborn University}
  \country{Germany}}
\email{leonie.nora.sieger@uni-paderborn.de}

\author{Stefan Heindorf}
\affiliation{%
	\institution{Data Science Research Group}
	\streetaddress{}
	\city{Paderborn University}
	\country{Germany}}
\email{heindorf@uni-paderborn.de}

\author{Yasir Mahmood}
\affiliation{%
	\institution{Data Science Research Group}
	\streetaddress{}
	\city{Paderborn University}
	\country{Germany}}
\email{yasir.mahmood@uni-paderborn.de}

\author{Lukas Blübaum}
\affiliation{%
	\institution{Data Science Research Group}
	\streetaddress{}
	\city{Paderborn University}
	\country{Germany}}
\email{lukasbl@campus.uni-paderborn.de}

\author{Axel-Cyrille Ngonga Ngomo}
\affiliation{%
	\institution{Data Science Research Group}
	\streetaddress{}
	\city{Paderborn University}
	\country{Germany}}
\email{axel.ngonga@upb.de}

\renewcommand{\shortauthors}{Sieger et al.}

\begin{abstract}
Knowledge bases are widely used for information management, enabling high-impact applications such as web search, question answering, and natural language processing. They also serve as the backbone for automatic decision systems, e.g., for medical diagnostics and credit scoring. As stakeholders affected by these decisions would like to understand their situation and verify how fair the decisions are, a number of explanation approaches have been proposed. An intrinsically transparent way to do classification is by using concepts in description logics. However, these concepts can become long and difficult to fathom for non-experts, even when verbalized.
One solution is to employ counterfactuals to answer the question, ``How must feature values be changed to obtain a different classification?'' By focusing on the minimal feature changes, the explanations are short, human-friendly, and provide a clear path of action regarding the change in prediction. While previous work investigated counterfactuals for tabular data, in this paper, we transfer the notion of counterfactuals to knowledge bases and the description logic $\mathcal{ELH}$.
Our approach starts by generating counterfactual candidates from concepts, followed by selecting the candidates requiring the fewest feature changes as counterfactuals. When multiple counterfactuals exist, we rank them based on the likeliness of their feature combinations. We evaluate our method by conducting a user survey to determine which counterfactual candidates participants prefer for explanation. 
\end{abstract}

\keywords{XAI, machine learning, description logic, counterfactuals}

\maketitle

\section{Introduction}
\label{sec:introduction}

Knowledge bases (KBs) are commonly used to represent information in various domains.
KBs  such as DBpedia~\cite{Auer2007DBpedia}, Wikidata~\cite{Vrandecic2014Wikidata}, or YAGO~\cite{Suchanek2007YAGO} are used in web applications, including information retrieval~\cite{Rudnik2019Searching}, information generation~\cite{Negreanu2022Rows}, web search~\cite{Blanco2015Fast} and question answering~\cite{Grau2018Corpus}. 
In the medical domain, KBs such as  \textsc{DrugBank}~\cite{Wishart2018DrugBank}, \textsc{Snomed}~\cite{Brandt2004Reasoning}, and \textsc{STRING}~\cite{Szklarczyk2019STRING} are widely used for predicting whether a molecule is safe, whether it helps against a certain disease, and what the functions of proteins are~\cite{Hu2020OGB}. 
Further applications include medical diagnostics~\cite{Chai2020Diagnosis}, credit scoring~\cite{Zhan2018loan} and hiring decisions~\cite{chung2023domain}.
Typical machine learning tasks include predicting whether the information about an entity is complete~\cite{Galarraga2017Predicting, Lajus2018Are}, correct~\cite{Chen2020Correcting}, and what category an entity falls in~\cite{Zahera2021ASSET, Gangemi2012Automatic}. 
In many of these domains, explaining algorithmic decisions of AI systems to stakeholders is important~\cite{Adadi2018Peeking,Guidotti2019Survey,Arrieta2020Explainable,Meske2022Explainable}: 
\begin{inparaenum}[(i)]
\item subjects affected by model decisions would like to understand their situation and verify fair decisions; 
\item data scientists would like to debug and improve their model; 
\item regulatory entities would like to check compliance with laws and regulations.
\end{inparaenum}

For KBs, concepts in description logics (DLs) can serve as transparent, white-box models for binary classification and many approaches for learning concepts from positive and negative examples have been proposed~\cite{Buhmann2014Inductive, 
Demir2021Drill, Kouagou2021Neural, Kouagou2021Learning, Heindorf2022EvoLearner}. For each individual in the KB, they predict whether a concept holds, i.e., whether the individual should be classified as a positive instance for the concept. However, the learned concepts can become long and complex and often exceed 20 tokens, sometimes as many as 1,000~\cite{Heindorf2022EvoLearner}. This jeopardizes their use as short, human-friendly explanations. 
Moreover, as a concept provides an explanation \emph{for each} individual (global explanation), it often contains many parts that are irrelevant to explain the prediction of a \emph{single individual} (local explanation). 
To mitigate these issues, counterfactual explanations (CEs) can serve as a form of short and actionable explanations~\cite{Molnar2022Interpretable}. 
CEs focus on an antecedent that would have caused a different outcome (classification) had it been the case~\cite{Stepin2021Survey}. 

CEs answer the question of how the classifier's input needs to be minimally changed to arrive at a different prediction~\cite{Wachter2018Counterfactual}. Dandl et al.~\cite{Dandl2020Multi-Objective} generalize this idea and take further criteria into account.

They propose a multi-objective optimization problem with four objectives:
(1)~the prediction for the CE should be as close as possible to the desired prediction;
(2)~the CE should be as similar as possible to the original instance;
(3)~feature changes should be sparse;
(4)~the CE should have plausible/likely feature values/combinations.

While CEs are widely used for tabular data~\cite{Dandl2020Multi-Objective, Stepin2021Survey, Wachter2018Counterfactual}, in this paper we transfer the notion of CEs to DLs and generate simple explanations for \emph{individuals}. As a first step in this direction, we focus on the DL $\mathcal{ELH}$ that is designed to balance expressivity and computational tractability by limiting the language constructs and supporting efficient reasoning algorithms.

Given an $\mathcal{ELH}$-KB $\mathcal K$, a concept $C$, and an individual $x$ such that $C(x)$ holds (respectively does not hold) in $\mathcal K$, we generate the counterfactual candidates $\mathcal K'$ from $\mathcal K$ in which $C(x)$ does not hold (holds) by applying changes only to the ABox. We also define criteria of non-redundancy, locality and minimality with the goal of generating practically useful CEs for ML models focusing on the individual $x$.
\label{fig:ex}
\begin{figure}[t!]
\centering
\includegraphics[width=0.48\textwidth]{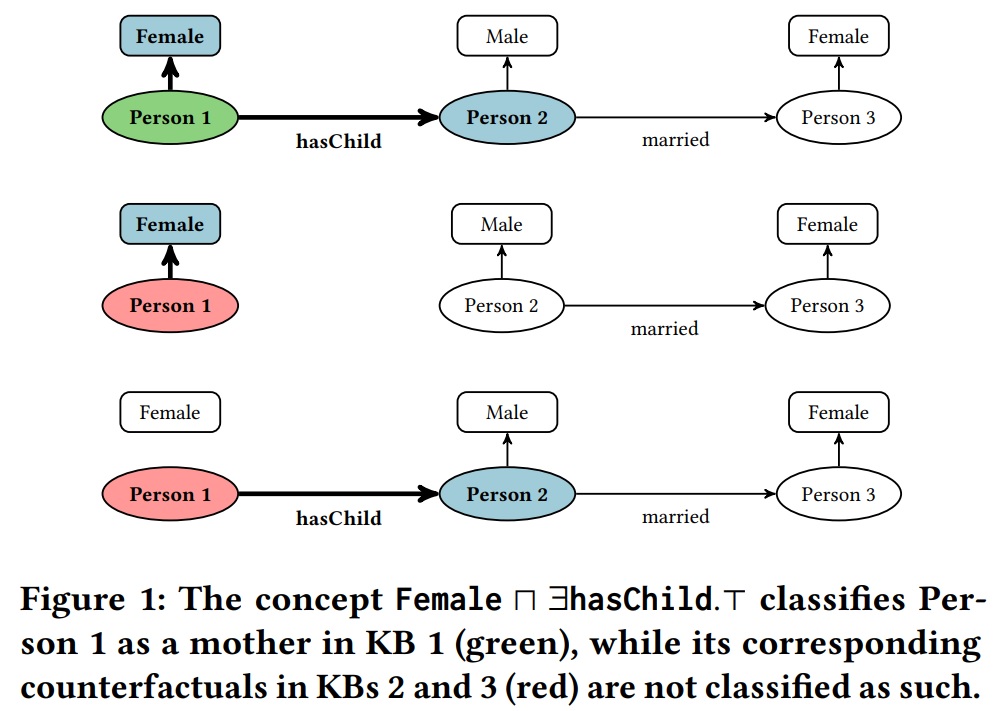}
\end{figure}
Figure~\ref{fig:ex} shows an example with two counterfactual candidates.
In line with previous works~\cite{Wachter2018Counterfactual, Dandl2020Multi-Objective}, we define counterfactuals (CFs) as those candidates which are most similar to the original KB.
Finally, we rank these CFs according to the plausibility with which they appear in the real world, i.e., the \emph{likeliness} of the combination of their features, to construct CEs from the most plausible ones. 
We also conducted a user survey to investigate if the selected CEs are indeed preferred by users. To summarize, our contributions are as follows:
\begin{enumerate}
    \item We formally define CFs and CEs for individuals in DLs.
    \item We provide an algorithm to compute CEs for concept descriptions in $\mathcal{ELH}$.
    \item We provide a heuristic (likeliness) to choose among multiple counterfactuals such that the user will only see a small number of counterfactual explanations.
    \item We evaluate our CEs and likeliness measures via a user study.
\end{enumerate}
In what follows, Section~\ref{sec:related-work} discusses related work, Section~\ref{sec:preliminaries} introduces preliminaries,  Section~\ref{sec:approach} formalizes the notion of CEs in $\mathcal{ELH}$ and presents our algorithms to generate CEs, and Section~\ref{sec:evaluation-method} presents the user study to investigate user preferences. Finally, Section~\ref{sec:discussion} discusses our results.
For all data, code and materials needed for reproducing, and a full version including proof details, see our repository \footnote[1]{\href{https://github.com/LNSieger/Counterfactual-Explanations-DL-ELH/}{https://github.com/LNSieger/Counterfactual-Explanations-DL-ELH/ \label{repo}}}.

\section{Related Work}
\label{sec:related-work}
Current explanation approaches for description logics include justification, axiom pinpointing, as well as abduction (for both ABoxes and TBoxes).
\emph{Justification}-based approaches \cite{Kalyanpur2007Finding, Horridge11, KalyanpurPG06} aim at finding explanations of why a TBox axiom (also called general concept
inclusions or subsumption) is entailed by a knowledge base. In contrast, our work focuses on the entailment of ABox axioms.
The explanations, also called justifications, can be computed via axiom pinpointing \cite{SchlobachC03, BaaderPS07, BaaderPS07a}
, which yields a (minimal) subset of TBox axioms that entails a given TBox axiom.
\emph{Abductive} reasoning computes a hypothesis that, together with the knowledge base, is sufficient to entail a given set of axioms.
ABox abduction~\cite{PukancovaH18,Del-PintoS19,Koopmann21a} expresses both the hypothesis and observation in terms of ABox axioms. 
In $\mathcal{ELH}$, CEs for an assertion to hold are closely related to ABox abduction (see section~\ref{subsec:gen}). However, CEs for an assertion to fail can not be covered by abduction. 

Once abductive explanations or justifications have been obtained, ontology \emph{repairing}~\cite{BaaderKNP18,BaaderKKN21,BaaderKKN22} 
provides means to make minimal changes to a given knowledge base, enforcing or preventing a particular entailment. 
Moreover, approaches that employ ABox repairing~\cite{BienvenuB16} address the task of making an inconsistent ABox consistent before answering queries on the knowledge base.
While repairing approaches aim to make minimal changes to a knowledge base, our definition of optimal changes to a knowledge base is different and we take further criteria into account (e.g., our likelihood and locality measures).
Whereas \citet{Lehmann2012Framework} developed a framework for measuring the similarity of $\emph{concepts}$, our setting requires measuring the similarity of \emph{individuals} (rather than concepts).

The goal and applications of our counterfactuals is different from all the techniques named above: it is to explain the white-box machine learning model (=concept) learned by a concept learner to (lay) users. 
The difference becomes particularly apparent when we decide which counterfactual candidate is considered for counterfactual explanation.

Although counterfactuals have been mentioned in the context of description logics before, their definitions are vastly different from ours. \citet{FilandrianosTDS22} recently proposed a framework for computing counterfactual explanations for \emph{black-box} classifiers whereas we aim to explain \emph{white-box} models---namely concepts obtained from concept learners~\cite{Buhmann2014Inductive, Sarker2019Efficient, Demir2021Drill, Kouagou2021Neural, Kouagou2021Learning, Heindorf2022EvoLearner}.
\citet{IannonePF07} use the term ``counterfactual'' for negated residuals (i.e., parts of a \emph{concept}) and use them to prune the search space of a concept learner. In contrast, our counterfactuals are (individuals in) knowledge bases and we use them to explain the output of a concept learner.

\section{Preliminaries}
\label{sec:preliminaries}

We give a brief overview of the DL $\mathcal{ELH}$.
For further details, we refer the reader to existing literature~\cite{Brandt2004Reasoning, Baader2017Introduction}.

\paragraph{The Description Logic $\mathcal{ELH}$.}

\newcommand*\rot{\rotatebox{90}}

\begin{table}[htb]
  \centering
  \caption{$\mathcal{ELH}$ description logic constructs. }
  \label{tab:elh}
  \setlength{\tabcolsep}{3.5pt}
  \begin{tabular}{@{}llll@{}}
  \toprule
  & \textbf{Syntax} & \textbf{Semantics} & \textbf{Construct} \\
  \midrule
  
  & $\top$        & $\Delta^\mathcal{I}$                                        & top concept \\
  & $A$        & $A^\mathcal{I} \subseteq \Delta^\mathcal{I}$  & atomic concepts \\ 
  &  $r$           & $r^\mathcal{I} \subseteq \Delta^\mathcal{I} \times \Delta^\mathcal{I}$        & atomic roles \\ \cmidrule{2-4}
  
  & $C \sqcap D$  & $C^\mathcal{I} \cap D^\mathcal{I}$                                    & intersection \\
  & $\exists r.C$ & $\{x \in \Delta^\mathcal{I} | \exists y \in \Delta^\mathcal{I}$       & existential restriction \\
  \rot{\rlap{\hspace{16pt}Concepts}}
  && $\mbox{ with } (x, y) \in r^\mathcal{I} \wedge y \in C^\mathcal{I}\}$ & \\
  \midrule
  
  & $ C(x)$  & $ x^\mathcal{I} \in C^\mathcal{I}$                                         & concept assertion \\
  \rot{\rlap{\hspace{-2pt}ABox}}
  & $ r(x, y) $  & $(x^\mathcal{I}, y^\mathcal{I}) \in r^\mathcal{I}$                               & role assertion \\\midrule

  & $C \sqsubseteq D$  & $C^\mathcal{I} \subseteq D^\mathcal{I}$      & concept subsumption \\
  \rot{\rlap{\hspace{-2pt}TBox}} &
  $r \sqsubseteq s$  & $r^I \subseteq s^\mathcal{I}$  & role subsumption \\  
  \bottomrule
  \end{tabular}
\end{table}
In DLs~\cite{Baader2017Introduction}, knowledge is represented by concept descriptions built from atomic concepts $A, B \in N_C$ and roles $r, s \in N_R$, where $N_C$ and $N_R$ are finite sets of concept and role names. 
Every concept name $A$ as well as the top concept $\top$ is a concept description.
As depicted in Table~\ref{tab:elh}, other $\mathcal{ELH}$-concept descriptions are constructed using the intersections ($C \sqcap D$) and existential restrictions ($\exists r.C$) from the concept descriptions $C,D$ and role $r$. 
Their semantics is defined in terms of an interpretation $\mathcal{I}=(\Delta^\mathcal{I}, \cdot^\mathcal{I})$ which consists of the non-empty set $\Delta^\mathcal{I}$, called interpretation domain, and the function $\cdot^\mathcal{I}$, called interpretation function, that assigns each $A \in N_C$ a set $A^\mathcal{I} \subseteq \Delta^\mathcal{I}$ and each $r\in N_R$ a binary relation $r^\mathcal{I} \subseteq \Delta^\mathcal{I} \times \Delta^\mathcal{I}$~\cite{Lehmann2010Concept}. 
Furthermore, each individual $x \in N_I$, where $N_I$ is a finite set of individual names, is assigned an element $x^\mathcal{I} \in \Delta^\mathcal{I}$~\cite{Lehmann2010Concept}.

Let $C, D$ be concepts, $r, s$ be role names, and $x, y\in N_I$ be individuals. 
A \emph{TBox} is a set of concept inclusion ($C \sqsubseteq D$) and role inclusion axioms ($r \sqsubseteq s$).
A concept (resp., role) assertion is an expression of the form $C(x)$ ($r(x,y)$).
An \emph{ABox} is a set of concept and role assertion axioms. 
Finally, a knowldge base $\mathcal K$ consists of a TBox and an ABox.
An interpretation $\mathcal{I}$ is a model of the KB $\mathcal{K}$ (denoted as $\mathcal I \models \mathcal K$) iff $\mathcal I$ satisfies all axioms in the TBox and ABox. 
An individual $x \in N_I$ is an instance of a concept $C$ with respect to $\mathcal{K}$, written $\mathcal{K} \models C(x)$ iff in all models $\mathcal{I}$ of $\mathcal{K}$, we have that $a^\mathcal{I} \in C^\mathcal{I}$. 
We say that $C$ holds for $x$ in $\mathcal{K}$ if $\mathcal K\models C(x)$.

In this paper, we make the unique-name assumption (UNA) and require that $x^{\mathcal I} \neq y^{\mathcal I}$ for individuals $x,y$ such that $x \neq y$ \cite{Baader2017Introduction, DLhandbook}.

\section{Counterfactuals in \texorpdfstring{$\mathcal{ELH}$}{ELH}}
\label{sec:approach}

Following Wachter et al.~\cite{Wachter2018Counterfactual} and Dandl et al.~\cite{Dandl2020Multi-Objective}, who defined CFs for black-box machine learning models with fixed-size input vectors, we transfer their definition to concept assertions in DL.
Given a KB $\mathcal K$, a concept $C\in N_C$ and an individual $x \in N_I$, then a KB $\mathcal K'$ is a counterfactual candidate for $C(x)$ iff
either $\mathcal{K} \models C(x) \text{ and } \mathcal{K}' \not\models C(x) \text{, or } \mathcal{K} \not\models C(x) \text{ and } \mathcal{K}' \models C(x)$.

That is, the evaluation of $C(x)$ differs with respect to $\mathcal K$ and $\mathcal K'$ where
$\mathcal{K}$ and $\mathcal{K'}$ are defined over the same atomic concepts and roles.

A counterfactual candidate can be seen as an intermediate step in our approach to creating CFs.
We formalize counterfactuals as a response to a user's request to change an ABox (i.e., requesting a different scenario).
Let $\mathcal{K}=(\mathcal{T}, \mathcal{A})$ be a KB, and $C(x)$ be a concept assertion for an individual $x$ in $\mathcal{K}$. 
We call the pair $P=\langle C(x),U\rangle$ a \emph{counterfactual request}, where $U\in \{\text{add}, \text{rem}\}$.
Let $\mathcal{K'} = (\mathcal{T}, \mathcal{A'})$ be a KB such that: $\mathcal{K'} \models C(x)$ if $U=\text{add}$, or $\mathcal{K'} \not \models C(x)$ if $U=\text{rem}$. 
Then we say that $\mathcal{K'}$ \emph{fulfills} the request $P$ and denote this by $\mathcal{K'} \vdash P$.
Furthermore, we call $\mathcal K'$ a counterfactual candidate for $P$ in $\mathcal{K}$. 

We assume that $\mathcal K$ does not already fulfill $P$ and hence needs to be appropriately updated. 
The fact that $\mathcal K\not\vdash P$ and $\mathcal K'\vdash P$ corresponds to the first objective (closeness to desired prediction) of CEs proposed by \citet{Dandl2020Multi-Objective}.
The creation of a counterfactual candidate can also be denoted as an update $\mathcal{K} \to \mathcal{K'}$. 
To update 
a KB  (when $U= rem$), it is often necessary to change multiple assertions from which the target assertion can be inferred. 
Due to the complex nature of TBoxes, there are sometimes multiple ways to achieve this.

\begin{example}\label{ex:update}
Let $\mathcal{K}=(\mathcal T,\mathcal A)$, where $\mathcal{T}:=\{B \sqcap C \sqsubseteq D\}$ and $\mathcal{A}:=\{B(x), C(x), D(x)\}$. 
Moreover let $P=\langle D(x),rem \rangle$ be a counterfactual request.
Then there are three updates,  $\mathcal K'_1 :=(\mathcal T, \{B(x)\}$), $\mathcal K'_2 :=(\mathcal T, \{C(x)\})$ and $\mathcal K'_3 :=(\mathcal T, \emptyset)$ that fulfill $P$.

\end{example}

Clearly, candidates \emph{without redundancy} having \emph{minimal changes} are preferred in such scenarios.
 
Intuitively, an update $\mathcal{K} \to \mathcal{K'}$ is non-redundant if no changes were made in $\mathcal K'$ so that avoiding these changes would still fulfill the counterfactual request.
In the following, $\mathcal K =(\mathcal T, \mathcal A)$ and $\mathcal K'=(\mathcal T,\mathcal A')$ denote two KBs with the same TBox ($\mathcal T$) and different ABoxes ($\mathcal A$ and 
We write $\Updelta(\mathcal A, \mathcal A')$ to denote the symmetric difference of $\mathcal A$ and $\mathcal A'$ defined as $\Updelta(\mathcal A, \mathcal A'):= (\mathcal A\setminus \mathcal A')\cup (\mathcal A'\setminus \mathcal A)$.
\begin{definition}[Non-redundancy of Changes]
\label{sec:def2}
Let $\mathcal K $ and $\mathcal K'$ be two KBs and $P=\langle C(x), U\rangle$ be a CF request. 
If $\not \exists D(y) \in \mathcal{A} \setminus \mathcal A'$ such that $(\mathcal{T}, \mathcal{A'} \cup \{D(y)\}) \vdash P$ if $U=rem$, and $\not \exists D(y) \in \mathcal{A'} \setminus \mathcal A$ so that $(\mathcal{T}, \mathcal{A'} \setminus \{D(y)\}) \vdash P$ if $U=add$, then the update $\mathcal{K} \to \mathcal{K'}$ is non-redundant.
\end{definition}

In Example~\ref{ex:update}, the updates $\mathcal{K} \to \mathcal K'_1$ and $\mathcal K \to \mathcal K'_2$ are non-redundant, whereas the update $\mathcal{K} \to \mathcal K'_3$ is redundant.

When creating a CF with respect to $C(x)$ for an individual $x$, it might not be desired to affect concept assertions for other individuals $y \neq x$. 
As a result, CF creation should restrict the allowed changes to those concepts $D$ such that $D(x)\in \mathcal A$. 
This way, CEs answer the question of what features of $x$ should be modified to change its classification.
\begin{definition}[Local counterfactual candidates]
Let $\mathcal K$ and $\mathcal K'$ be two KBs and $P$ be a CF request. 
If $\not \exists D(y) \in \Updelta(\mathcal A, \mathcal A')$  where $y \neq x$, then $\mathcal{K'}$ is a local counterfactual candidate for $P$ in $\mathcal K$.
\end{definition}

As proposed by Dandl et al.~\cite{Dandl2020Multi-Objective}, a CF is a counterfactual candidate with minimum feature changes. 
In a KB $\mathcal K$, we operationalize this via the notion of edit distance, i.e., the number of additions or removals of axioms necessary to create the CF $\mathcal K'$  from $\mathcal K$ such that $\mathcal K'$ fulfils a request $ P$.
We will only allow to update an ABox and define the edit distance with respect to changes in the ABox.

Let $\mathcal K=(\mathcal T, \mathcal A)$ and $\mathcal K'=(\mathcal T, \mathcal A')$ be two KBs.
We let $A_{\mathcal K}^{x}$ (resp., $A_{\mathcal K'}^{x}$) denote the set of all concept ($D$) and role names ($r$) in $\mathcal K$ ($\mathcal K'$), such that $D(x)\in \mathcal A$ or $r(x,y)\in \mathcal A$ (resp., in $\mathcal A'$). 
Then the counterfactuals for $P=\langle C(x),U\rangle $ in $\mathcal K$ are the KBs $\mathcal K'$ such that $\mathcal K'\vdash P$ and $\mathcal K'$ have the lowest \emph{edit distance} to $\mathcal K$.
The edit distance $\delta_{\mathit{cf}}(\mathcal {K},\mathcal {K'})$ of a CF is formalized via the symmetric difference $\Updelta(A_\mathcal K^x, A_\mathcal {K'}^x)$ as follows.
\begin{align}
\begin{split}
& \delta_{\mathit{cf}}(\mathcal K,\mathcal K') = |\Updelta(A_\mathcal K^x, A_\mathcal {K'}^x)| 
\label{eq:edit}
\end{split}
\end{align}
\begin{definition}[Minimal Changes]
Let $\mathcal K$ and $\mathcal K'$ be two KBs and $P$ be a CF request.
Then, $\mathcal K'$ is a \emph{counterfactual} for $P$ if $\mathcal K'$ is a counterfactual candidate for $P$ in $\mathcal K$ and $\delta_{\mathit{cf}}(\mathcal {K},\mathcal {K'})\leq \delta_{\mathit{cf}}(\mathcal {K},\mathcal {K''})$ for every such candidate $\mathcal K''$.
\end{definition}
Note that the 2nd (closeness of counterfactual to original instance) and the 3rd (sparse feature changes) objectives as specified by Dandl et al.~\cite{Dandl2020Multi-Objective} collapse in this criterion of minimal changes, since in $\mathcal{ELH}$, all features are discrete (a concept either holds or doesn't).

Next, we introduce two measures ($\mathit{l}_{\mathit{min}}$ and $\mathit{l}_{\mathit{mean}}$) of the likeliness (Dandl et al.'s objective 4~\cite{Dandl2020Multi-Objective}) with respect to an individual $x$ among different CFs fulfilling a removal request. 
Let $I_n$ denote the set of all \emph{existing negative} individuals ${y} \in N_I$ w.r.t. $C$, i.e., the set of individuals ${y}$ such that $\mathcal{K} \not\models C({y})$. 

The min-likeliness $l_{\mathrm{min}}$ is the minimal edit distance between $x$ and another individual $y$ such that $\mathcal K' \not \models C(y)$; 
the mean-likeliness $l_{\mathrm{mean}}$
denotes the \emph{average} edit distance between $x$ and all such individuals $y$.
Let $P=\langle C(x),rem\rangle$, $\mathcal K$ be a KB and $\mathcal K'$ be such that $\mathcal K'\vdash P$.
Moreover, let $y\in I_n$ 
and $\delta_{lm}(x,y,\mathcal K'):=|\Updelta (A_{\mathcal K'}^x,A_{\mathcal K'}^y)|$.
. 
Then,  $l_{\mathrm{min}}$ and $l_{\mathrm{mean}}$ are defined as follows:

\begin{equation}
l_{\mathrm{min}}(P,\mathcal K') := \min_{y\in I_n} \delta_{lm}(x,y,\mathcal K')
\label{eq:likeliness1}
\end{equation}
\begin{equation}
l_{\mathrm{mean}}(P,\mathcal K') := \frac{1}{|I_n|} \sum_{{y\in I_n}} \delta_{lm}(x,y,\mathcal K')
\label{eq:likeliness2}
\end{equation}
%
\subsection{Generation of Counterfactuals}
\label{subsec:gen}
\begin{algorithm}[t]
	\DontPrintSemicolon
	\caption{Finds all non-redundant updates $\mathcal{K} \to \mathcal K'$ s.t. $\mathcal K' \not \models C(x)$}%
	\label{alg:create-candidates_not_hold}
    \textbf{Input:} KB $\mathcal{K}$ and a CF request $P= \langle C(x),rem \rangle $
    s.t. $\mathcal{K} \not \vdash P$\\
    	\textbf{Output:} Local counterfactuals $\mathcal K'$ in $\mathcal{K}$ sorted by likeliness such that with $\mathcal K' \vdash P$, global variable $candidates$ \\
        \SetKwFunction{createcandidates}{create\_candidates\_neg}
        \SetKwFunction{findcandidates}{find\_candidates}
        \SetKwFunction{acopy}{copy}
    	\SetKwProg{Pn}{Function}{:}{\KwRet $\mathit{cfs\_min},\mathit{cfs\_mean}$}
    	\Pn{\createcandidates{$\mathcal{K}$, $C$, x}}{%
    	    $KBs \gets \emptyset$\;
    	    $candidates \gets \emptyset$\;
         ${C} \gets \{C_{1}, \ldots,C_{n}\}$ \;
      		\For{$C_{j}$ in ${C}$}{%
      $\findcandidates(\mathcal{K}, \{C_{j}\}, \emptyset, \emptyset, x)$\;
      	}
       
      	\For{$\mathcal{X}$ in candidates}{
      	$\mathcal K_{\mathcal{X}}$ $\gets$ $\acopy(\mathcal{K})$\\
      	\For{$X \in \mathcal{X}$}{
      	\uIf{$X \equiv \top$}{%
      			$\mathcal{K}_{\mathcal{X}} \gets None$\; 
      		} 
      	\uElseIf{$X$ is an atomic class}{%
      	Remove $X(x)$ from $\mathcal K_{\mathcal{X}}$
      		}
      	\uElseIf{$X = \exists r.A$}{%
      	Remove all $\{r(x, a) \, | \, \mathcal{K} \models A(a)$\} from $\mathcal K_{\mathcal{X}}$\;
      	}
      	}
            $KBs \gets KBs \cup \{\mathcal K_{\mathcal{X}}\}$
       }
        $\mathit{cfs} \gets$ $\argmin \limits_{\mathcal K' \in KBs}$ $\delta_{\mathit{cf}}(\mathcal K,\mathcal K')$\;
        $\mathit{cfs\_min} \gets$ sort $\mathit{cfs}$ by $l_{\mathrm{min}}$\;
        $\mathit{cfs\_mean} \gets$ sort $\mathit{cfs}$ by $l_{\mathrm{mean}}$\;
      	}
\end{algorithm}

\begin{algorithm}[t]
	\DontPrintSemicolon
	\caption{Recursively follow path of subsumptions, add set(s) of relevant concepts to global variable $candidates$}%
	\label{alg:find_candidates}
    \textbf{Input:} KB $\mathcal{K}=(\mathcal{T}, \mathcal{A})$, set of concepts to look at in this iteration ($c\_set$), set of concepts already visited, set of visited concepts relevant for this candidate, target individual $x$, global variable $candidates$ \\
        \SetKwFunction{findcandidates}{find\_candidates}
    	\SetKwProg{Pn}{Function}{:}{}
    	\Pn{\findcandidates{$\mathcal{K}$, c\_set, visited, relevant, $x$}}{%
            $relevant \gets relevant \cup c\_set$\;
            $\mathcal{E} \gets \{E | \exists F: (E \sqsubseteq F) \in \mathcal{T}, \mathcal{K} \models E \sqsubseteq C_{i}$ for some $C_{i} \in c\_set$\}\\
      	$\mathcal{D} \gets \{D| D \in \mathcal {E}, D \notin visited$ and $ D(x)\in \mathcal{A}$ \}\;
      	\For{each $D_j\in \mathcal D $
       }{
                $relevant \gets relevant \cup \{D_j\}$\;
      	}
      	\For{each $E_j=E_{j_1}\sqcap E_{j_2} \sqcap \ldots \sqcap E_{j_n} \in \mathcal E$}{
            $EI_{j} \gets \{E_{j_1},\ldots, E_{j_n}\}$
        }
            $\mathcal{P} \gets \{P \subseteq \cup_{j} EI_j \enskip | \enskip \forall_j |P \cap EI_j| \geq 1$ and $\not\exists p \in P \forall_j |P \setminus \{p\} \cap EI_j| \geq 1\}$\;
            \tcp*[l]{all non-redundant combinations covering each $EI_j$ 
            }
  		\For{each $E \in \mathcal{E}$}{
        $visited \gets visited \cup \{E\}$\;
      	}
      	\uIf{$\mathcal P = \emptyset$}{
            $candidates \gets candidates \cup \{relevant\}$
            \tcp*[l]{candidates is a set of sets}
      	}
      
        \Else{
            \For{each $P\in \mathcal P$}{
            \findcandidates{$\mathcal{K}$, P, visited, relevant, x}}
        }
    } 
\end{algorithm}
\begin{algorithm}[t]
    \DontPrintSemicolon
	\caption{Updated KB $\mathcal{K}'$ such that $\mathcal{K}' \models C(x)$}%
	\label{alg:create-candidates_hold}
    \textbf{Input:} KB $\mathcal{K}$ and a CF request $P=\langle C(x),add\rangle $ s.t. $\mathcal K \not\vdash P$\\
    	\textbf{Output:} Local counterfactual $\mathcal{K}'$ in $\mathcal{K}$ such that $\mathcal {K}' \vdash P$\\
        \SetKwFunction{makehold}{create\_candidates\_pos}
        \SetKwFunction{acopy}{copy}
    	\SetKwProg{Pn}{Function}{:}{\KwRet {$\mathcal{K}'$} }
    	\Pn{\makehold{$\mathcal{K}$, $C$, $x$}}{%
            $\mathcal K'$ $\gets$ $\acopy(\mathcal{K})$\\
            ${C} \gets \{C_{1}, \ldots,C_{n}\}$ \;
      	\For{$C_{j}$ in ${C}$}{%
                \uIf{$C_j$ is an existential restriction $\exists r.D$}{
                    add $r(x,y)$ to $\mathcal{K}'$ for a fresh individual $y$ \;
                    $\mathcal{K}' \gets$ \makehold{$\mathcal{K}'$, $D$, $y$} \tcp*[l]{recursive call}
                }
      	    \uElse{
                    add $C_j(x)$ to $\mathcal{K}'$  
                }
           }
      } 
\end{algorithm}



Note that we distinguish between two types of counterfactual requests depending on whether a concept should hold ($U =add$) or should not hold ($U= rem$).
In order to fulfill a request $P=\langle C(x),rem\rangle$ for an assertion $C(x)$, 
we present Algorithm~\ref{alg:create-candidates_not_hold}. 
It is worth mentioning that axiom pinpointing for $\mathcal{EL}$~\cite{BaaderPS07,BaaderPS07a, BaaderP10} computes all justifications for the entailment of a subsumption ($X\sqsubseteq Y$) and not an assertion ($Y(x)$).
Whereas, we are not interested in finding every minimal subset of the TBox entailing subsumption but rather in determining whether subsumption holds for specific concepts in the ABox. 
Algorithm~\ref{alg:create-candidates_not_hold} applies a direct approach to compute all updates of an ABox .

\paragraph{Algorithm~\ref{alg:create-candidates_not_hold}.} 
Assume that $C=C_1\sqcap C_2\ldots \sqcap C_n$ and no $C_j$ contains an intersection on the outer level anymore (i.e., $C_j \neq A \sqcap B$ for any $C_j$).
Then Algorithm~\ref{alg:create-candidates_not_hold} begins by
splitting $C$ into the subconcepts $C_j$ (Line~$6$) in order to apply the subroutine from Algorithm~\ref{alg:find_candidates} to each $C_j$.
The sets of concepts added to the global variable \emph{candidates} by $\findcandidates(\mathcal K, \{C_{j}\}, \emptyset, \emptyset, x)$ are the desired concepts for removal to create one candidate from each set.
In other words, if $\mathcal K'$ is obtained from $\mathcal K$ by removing assertions as described in Lines~10--18, then $\mathcal K'\not\models C(x)$.
This is due to the reason that the subroutine $\findcandidates(\mathcal K, \{C_{j}\}, \emptyset, \emptyset, x)$ begins by finding all concepts subsumed by $C_j$ (the set $\mathcal E$) as well as all those for which the ABox contains an assertion (the set $\mathcal D$). 
In particular, $\mathcal D$ includes all the concepts that are essential while inferring $\mathcal K\models C_j(x)$.
As a result, each atomic and existential restriction in 
$\mathcal D$ is directly added to the set of relevant concepts since removing the ABox assertions corresponding to each of these concepts can fail the inference $\mathcal K\models C_j(x)$. 
In the case of existential restrictions $\exists r.A$, it does not matter if A is an atomic concept, another existential restriction, or an intersection, since to get local counterfactuals we simply remove the assertions $r(x,a)$ where $\mathcal{K} \models A(a)$ and do not need to remove $A(x)$.
Moreover, for each $E_j\in \mathcal E$ such that $E_j=E_{j_1}\sqcap\ldots\sqcap E_{j_n}$, removing an assertion for only one concept from $E_j$ can result in failing the inference $\mathcal K\models C_j(x)$.
The choice of allowing only one concept removal from $E_j$ is essential to avoid redundancy since $\mathcal K \not \models E_{j}(x)$ iff $\mathcal K \not \models E_{j_i}(x)$ for at least one $E_{j_i}\in E_j$.
The algorithm implements this by considering all possible combinations for different $j$ but taking only one concept from each $E_j$ (Line~$12$).

Moreover, the set of visited concepts is updated to avoid rechecking for concepts that are already visited (Lines~13--15).
Finally, the recursive calls to $\findcandidates(\mathcal K, P, visited, relevant, x)$ for each combination $P$ add further concepts as being relevant and, once completed, all the relevant concepts are returned as a candidate set.
This recursion for each $P$ is essential since $P$ also includes (removal) candidates for $C_j$.
Finally, Algorithm~\ref{alg:create-candidates_not_hold} uses the global variable \emph{candidates} and generates local counterfactuals corresponding to each set $\mathcal X$ of candidates.
Once all the updated KBs have been enumerated, the algorithm sorts the KBs with the least edit distance as counterfactuals according to Eq~\ref{eq:edit}. 
Furthermore, among these counterfactuals, the ones with the highest likeliness are selected for creating natural language CEs following the two likeliness measurements (Eqs~\ref{eq:likeliness1}-\ref{eq:likeliness2}).

\begin{theorem}\label{thm:neg}
The Algorithm~\ref{alg:create-candidates_not_hold} is sound and complete: Given a KB $\mathcal{K}$ and an update request $P=\langle C(x),\text{rem}\rangle$, such that $\mathcal K\not\vdash P$, Algorithm~\ref{alg:create-candidates_not_hold} returns a collection KBs such that  $\mathcal K'\in KBs$ if and only if $\mathcal K'$ is a local counterfactual for $P$ without redundancy in $\mathcal K$.  
\end{theorem}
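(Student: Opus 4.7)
The plan is to prove soundness and completeness separately, relying on an invariant about the subroutine \findcandidates in Algorithm~\ref{alg:find_candidates}. Let me write $\findcandidates(\mathcal{K}, c\_set, \emptyset, \emptyset, x)$ for the top-level invocation. I will establish by induction on the number of remaining recursive calls the following invariant: upon termination, the sets added to \emph{candidates} are exactly the inclusion-minimal sets $\mathcal{X}$ of concepts occurring in the ABox of $x$ such that removing the corresponding $x$-assertions (in the sense of Lines~10--18 of Algorithm~\ref{alg:create-candidates_not_hold}) breaks the entailment $\mathcal{K} \models D(x)$ for every $D \in c\_set$, without containing any superfluous concept. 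The base case is when $\mathcal{P} = \emptyset$ (no intersections to branch on), and the step case combines the inductive hypothesis over each branch $P \in \mathcal{P}$.

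For \textbf{soundness}, assume $\mathcal{X} \in \emph{candidates}$ was produced from the seed $\{C_j\}$. By the invariant, removing the corresponding $x$-assertions yields $\mathcal{K}_\mathcal{X} \not\models C_j(x)$, hence $\mathcal{K}_\mathcal{X} \not\models C(x)$ since $C = C_1 \sqcap \cdots \sqcap C_n$. Locality is immediate because Lines~13,~15 and~17 only touch assertions of the form $D(x)$ or $r(x,a)$. Non-redundancy follows from the inclusion-minimality in the invariant: reinserting any single removed assertion would reinstate the subsumption chain that reinstates $C_j(x)$. Finally, since the algorithm restricts $KBs$ to the $\argmin$ over $\delta_{\mathit{cf}}$, the returned KBs satisfy the minimal-changes definition.

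For \textbf{completeness}, let $\mathcal{K}'$ be any non-redundant local counterfactual for $P = \langle C(x), \text{rem}\rangle$. Since $\mathcal{K}' \not\models C(x)$ and $C = C_1 \sqcap \cdots \sqcap C_n$, choose some $C_j$ with $\mathcal{K}' \not\models C_j(x)$. Non-redundancy and locality imply that the symmetric difference $\Updelta(\mathcal{A}, \mathcal{A}')$ consists exclusively of $x$-assertions, and each one is necessary to break the derivation of $C_j(x)$. I then argue that the set of concept/role names appearing in $\Updelta(\mathcal{A}, \mathcal{A}')$ is precisely one of the inclusion-minimal sets enumerated by the invariant above for the call seeded with $\{C_j\}$: any derivation of $C_j(x)$ proceeds via subsumption chains $E \sqsubseteq \cdots \sqsubseteq C_j$ where intersections $E_{j_1} \sqcap \cdots \sqcap E_{j_n}$ must be cut by removing at least one component (captured by $\mathcal{P}$ in Line~12), and the \emph{visited} set never prunes a concept that could participate in a new minimal cut by a transitivity/monotonicity argument on $\sqsubseteq$. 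Hence the algorithm generates $\mathcal{X}$ with $\mathcal{K}_{\mathcal{X}} = \mathcal{K}'$, and since $\mathcal{K}'$ has minimal edit distance by assumption, it survives the $\argmin$ filter.

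The main obstacle is the correctness of the branching in Line~12 of Algorithm~\ref{alg:find_candidates}, i.e., the claim that the family $\mathcal{P}$ of "hitting sets" of the intersection components $\{EI_j\}$ (subject to the non-redundancy side condition) precisely enumerates all non-redundant ways of breaking every $E_j(x)$ at once. I plan to handle this by a separate combinatorial lemma showing that such minimal hitting sets are exactly the subsets $P \subseteq \bigcup_j EI_j$ with $|P \cap EI_j| \geq 1$ for every $j$ and no proper subset of $P$ still satisfies this, matching the set-builder expression in Line~12. The existential-restriction case $\exists r.A$ requires a small additional argument: breaking $\exists r.A(x)$ locally means removing all edges $r(x,a)$ with $\mathcal{K}\models A(a)$ (Line~17); equivalently, we cannot repair locality by instead modifying $A(a)$ for $a \neq x$. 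Once this combinatorial lemma and the locality observation for existentials are in place, the induction goes through and the theorem follows.
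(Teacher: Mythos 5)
Your overall plan (soundness by noting that each candidate set breaks some conjunct $C_j$ of $C$, completeness by arguing that the recursion in Algorithm~\ref{alg:find_candidates} enumerates all ways of cutting the derivations of $C_j(x)$) follows the same route as the paper's own sketch, just with the two deferred lemmas made explicit. However, one concrete step would fail: your derivation of non-redundancy. Definition~\ref{sec:def2} is relative to the request $P=\langle C(x),\mathrm{rem}\rangle$: an update is already redundant if \emph{some} removed assertion can be re-added while $C(x)$ still fails; it is not enough that re-adding it reinstates the single conjunct $C_j(x)$ from which the candidate was seeded. Inclusion-minimality of a cut for $C_j$ therefore does not yield non-redundancy for $P$. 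Concretely, take $C=C_1\sqcap C_2$, $\mathcal T=\{C_2\sqsubseteq C_1\}$, $\mathcal A=\{C_1(x),C_2(x)\}$: the branch seeded with $C_1$ produces the candidate $\{C_1,C_2\}$, i.e.\ both assertions are removed, and this removal set is indeed minimal \emph{for breaking $C_1(x)$}; yet the resulting update is redundant, since re-adding $C_1(x)$ alone still leaves $C(x)$ broken. In the algorithm such candidates are only eliminated downstream by the $\delta_{\mathit{cf}}$-$\argmin$ filter, which your proof invokes only for the minimal-changes property, not for non-redundancy; a complete argument must either show that every redundant candidate loses the $\argmin$ comparison to some non-redundant candidate that is also generated, or restrict the claimed equivalence to the post-filter output.

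Relatedly, the invariant you intend to prove about Algorithm~\ref{alg:find_candidates} is stronger than what the procedure guarantees. The variable \emph{relevant} accumulates every element of \emph{c\_set} and $\mathcal D$ along a branch, including components of intersections $E_{j_1}\sqcap\dots\sqcap E_{j_n}$ that are not asserted (or even entailed) for $x$, because the set $\mathcal E$ is computed from the TBox alone without checking whether the chain is ``active'' for $x$. Hence the produced candidate sets are neither always sets of concepts occurring in $x$'s ABox assertions, nor always minimal removal sets for $C(x)$; the per-level hitting-set lemma you plan for Line~12 does not address this cross-level issue (a component chosen in one hitting set may already fail for $x$, or its removal may be subsumed by removals made higher up the branch). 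The paper's own argument is admittedly only a sketch at exactly these points, but as stated your invariant (``exactly the inclusion-minimal sets \dots without any superfluous concept'') is not literally true of the algorithm, and both the non-redundancy step and the completeness step built on it need a genuinely different justification (e.g., arguing at the level of the resulting KBs after the no-op removals are discarded and the edit-distance filter is applied).
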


\begin{proof}[Sketch]
The soundness ($\mathcal K' \vdash P$) follows since each KB $\mathcal K'$ is obtained from $\mathcal K$ after applying changes due to some candidate set $\mathcal X$.
Moreover, each $\mathcal X$ is returned due to the call $\findcandidates(\mathcal K, \{C_j\},\emptyset,\emptyset,x)$ for some concept $C_j\in C$.
This implies that for each $\mathcal K'$, there is some $C_j\in C$ such that $\mathcal K'\not\models C_j(x)$.
For completeness, notice that Lines~10--20 include the necessary and sufficient removal of assertions to create a KB $\mathcal K'$ that fulfills $P$. Therefore, every non-redundant local counterfactual $\mathcal K'$ is returned by the algorithm.
The full proof can be found in the repository~\footref{repo}. \qedhere

\end{proof}

Updating $\mathcal K$ to fulfill a request $\langle C(x), add\rangle$ is relatively easier to implement, since it suffices to add axioms implying $\mathcal K\models C(x)$.
This is similar to abduction since for a KB $\mathcal K$, a hypotheses set $H$ of assertions such that $\mathcal K\cup H\models C(x)$, yields a CF for $C(x)$ in $\mathcal K$. However, we differ from abduction in that we also allow to add fresh individuals. Algorithm~\ref{alg:create-candidates_hold} presents our approach .

\paragraph{Algorithm~\ref{alg:create-candidates_hold}}
The algorithm simply adds assertions $C_j(x)$ to the KB for each $C_j\in C$.
This yields the only non-redundant way to infer $\mathcal K' \models C(x)$.
Notice that if $C_j=\exists r.Z$ for some $j$ and a concept $Z$, then there are multiple options regarding the object $y$ such that the assertion $r(x,y)$ has to be added.
Either a new individual $y$ is added to $\mathcal K'$ and the process is repeated for $y$ regarding $Z$, or $y$ is an existing individual in $\mathcal K$ which is already in $Z$. If there exists no $y$ in $\mathcal K$ with $\mathcal K \models Z(y)$, of course it is not possible to create a local counterfactual.
In our case, we allow the addition of a fresh individual $y$. %
This achieves the desired goal since our main focus is on finding CEs for $C(x)$.
In practice, the decision to allow fresh individuals may depend on the application context.
Moreover, adding an assertion $D(x)$ to $\mathcal K'$ is redundant if $D$ is a subconcept of $C$ and a necessary consequence if $D$ is a superconcept of $C$.

\begin{theorem}\label{thm:pos}
The Algorithm~\ref{alg:create-candidates_hold} is sound and complete: Given a KB $\mathcal{K}$ and a request $P=\langle C(x),\text{add}\rangle$ such that $\mathcal K\not\vdash P$, Algorithm~Algorithm~\ref{alg:create-candidates_hold} returns an updated KB $\mathcal K'$ such that $\mathcal K'$ is a counterfactual for $P$ without redundancy in $\mathcal K$.  
\end{theorem}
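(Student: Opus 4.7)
The plan is to establish three properties of the KB $\mathcal{K}'$ produced by Algorithm~\ref{alg:create-candidates_hold}: (i)~soundness ($\mathcal{K}'\vdash P$), (ii)~non-redundancy of the updates it performs in the sense of Definition~\ref{sec:def2} for the add case, and (iii)~minimality of the edit distance $\delta_{\mathit{cf}}(\mathcal{K},\mathcal{K}')$ among all candidates satisfying $P$.

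For soundness, I would proceed by structural induction on the nesting depth of $C$. Write $C=C_1\sqcap\cdots\sqcap C_n$ so that no $C_j$ is itself an outermost intersection. For each atomic conjunct $C_j$ the algorithm inserts $C_j(x)$ directly, so $\mathcal{K}'\models C_j(x)$ trivially. For an existential conjunct $C_j=\exists r.D$, the algorithm creates a fresh individual $y$, inserts $r(x,y)$, and invokes itself recursively on $(\mathcal{K}',D,y)$; the inductive hypothesis yields $\mathcal{K}'\models D(y)$, and the semantics of existential restrictions then gives $\mathcal{K}'\models(\exists r.D)(x)$. Combining all conjuncts gives $\mathcal{K}'\models C(x)$.

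For non-redundancy the key observation is that Algorithm~\ref{alg:create-candidates_hold} always allocates \emph{fresh} individuals for existential witnesses, so the block of assertions added for each $\exists r.D$ conjunct forms a tree disjoint from the rest of $\mathcal{K}$. Consequently, removing any added role assertion $r(x,y)$ eliminates the only witness available for $(\exists r.D)(x)$, and a parallel induction on the fresh subtree shows that every newly added assertion rooted at $y$ is equally essential. For atomic conjuncts $C_j(x)$, because the other additions concern either disjoint fresh individuals or different names at $x$, removing $C_j(x)$ leaves only $\mathcal{K}$ to derive $C_j(x)$ and thus breaks $\mathcal{K}'\models C(x)$ whenever $\mathcal{K}\not\models C_j(x)$.

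The main obstacle will be the minimality claim for $\delta_{\mathit{cf}}$. For each conjunct $C_j$ I would argue that any candidate $\mathcal{K}''\vdash P$ must introduce at least one new name into $A^x$: a role $r'\sqsubseteq r$ for $C_j=\exists r.D$ and a concept $C'\sqsubseteq C_j$ for atomic $C_j$. The algorithm contributes exactly one such name per conjunct, matching this lower bound. The subtle point, which I expect to require most of the care, is that the algorithm adds $C_j(x)$ even when $\mathcal{K}\models C_j(x)$ holds via TBox subsumption; a rigorous proof therefore needs to show that in such cases the added name $C_j$ is either already in $A^x_{\mathcal{K}}$ (so it contributes nothing to the symmetric difference) or that no alternative candidate can avoid introducing at least as many fresh names. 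The full case analysis is deferred to the repository.
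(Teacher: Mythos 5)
Your soundness induction (part~(i)) is fine and is essentially the only argument the paper itself sketches in the text (the paper defers the actual proof of this theorem to its repository, offering only the remark that adding $C_j(x)$ per conjunct ``yields the only non-redundant way'' to obtain $\mathcal K'\models C(x)$). The genuine gaps are in your parts (ii) and (iii). For minimality, the lower bound ``any candidate must introduce at least one new name per conjunct'' is asserted rather than proved, and it is false in general $\mathcal{ELH}$: a single added assertion $A(x)$ can entail several conjuncts at once through the TBox (e.g.\ $A \sqsubseteq C_1 \sqcap C_2$, or $A \sqsubseteq \exists r.D$), so an alternative candidate can cover two conjuncts with one added name where the algorithm spends two; similarly, an existential conjunct need not be witnessed by adding any role $r'\sqsubseteq r$ at $x$. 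A correct minimality proof must exclude such TBox shortcuts (or state an explicit assumption under which they cannot occur); per-conjunct counting alone does not. Note also that $\delta_{\mathit{cf}}$ counts only concept and role \emph{names} asserted at $x$, so the entire fresh subtree rooted at the witness $y$ contributes nothing to the edit distance, and repeated additions of the same role name count once --- your bookkeeping in terms of ``fresh names'' is measuring a different quantity than Eq.~\ref{eq:edit}.

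For non-redundancy, your claim that removing an added $r(x,y)$ ``eliminates the only witness available'' fails exactly in the situation you flag at the end and then defer: $\mathcal K\not\vdash P$ only guarantees that \emph{some} conjunct of $C$ fails at $x$, so another conjunct $C_j=\exists r.D$ may already hold in $\mathcal K$. Algorithm~\ref{alg:create-candidates_hold} nevertheless adds a fresh witness for it, and removing that assertion from $\mathcal A'$ still fulfills $P$, contradicting Definition~\ref{sec:def2}. The paper's materialization convention rescues atomic conjuncts that already hold (their assertions are already in $\mathcal A$, so nothing enters the symmetric difference), but it does not rescue existential conjuncts. Since non-redundancy and minimality are precisely the nontrivial content of the theorem, deferring this case analysis leaves the proof incomplete as written.
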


Notice that, an obvious question emerges when considering inferred knowledge: how to treat the implicit assertions in an Abox for the selection of CEs? 

In other words, should the implicit knowledge also be considered when measuring the edit distance?
This leads one to chose that $\mathcal{A}$ should be fully materialized.
One rationale behind materialization  is: 1) all ``features'' (as authors in \cite{Dandl2020Multi-Objective} called it) that have to be removed or added to generate the counterfactual, even if implicit, should be counted in finding the counterfactual with least ``feature changes''. 2) Implicit features that do not have to be removed to create the counterfactual should not be removed.
For example, assume $\mathcal{T} \models A \sqsubseteq B \sqsubseteq C \sqsubseteq D$, $\mathcal{A} \models A(x)$, $P = \langle{C(x), rem}\rangle$, then $A(x), B(x), C(x)$ have to be removed and counted to the edit distance, but $D(x)$ does not. Materializing the KB before applying our algorithm achieves that result.

Nevertheless, this decision is optional and may depend on specific applications rather than being obligatory when utilizing our algorithm.

\section{Explanations Preferred by Users}
\label{sec:evaluation-method}

We conducted a survey in which we let participants rate different potential CEs against each other. We then compared the participants' preferences with the decisions made by our approach. Our hypothesis was that CEs generated by our algorithm will get positive ratings by study participants.
We used modified versions of the \emph{Family} and \emph{Animals} ontologies~\cite{Westphal2019SML-Bench} to evaluate our approach and materialized the ABoxes. These ontologies were chosen because the concept, role and individual names therein are familiar and understandable to average lay users---in contrast to, for example, ontologies related to bio-medicine or chemistry. We used the DL concept learner~\cite{Buhmann2016DL-Learner} with ELTL---the $\mathcal{EL}$ Tree Learner~\cite{Buhmann2014Inductive}---to learn the concepts to be used for counterfactual generation, since a future goal is to combine these programs to reach a fully automated explainable AI.

\subsection{Data Generation}

\begin{table}[tb]
\centering
\caption{Overview of the final, modified datasets in terms of number of instances ($N_I$), axioms, atomic concepts and roles.
}
\label{tab:datasets}
\setlength{\tabcolsep}{9.5pt}
\begin{tabular}{@{}lcccc@{}}
  \toprule
   & \textbf{Instances} & \textbf{Axioms} & {\textbf{Atomic}} & \textbf{Roles} \\
  \textbf{Dataset} & & & {\textbf{Concepts}} & \textbf{}  \\
  \midrule
  Family  &           202 &           2,033 &  18 & 5 \\
  Animals & \phantom{0}28 & \phantom{0,}170 &  19 & 4 \\
  \bottomrule
\end{tabular}
\end{table}

Table~\ref{tab:datasets} gives an overview of the datasets used for our user survey.
We used the DL concept learner~\cite{Buhmann2016DL-Learner} with ELTL to learn concepts from the Family and Animals ontologies that describe classes of family members or species of animals, respectively. A detailed description of the datasets and how we applied ELTL to find concepts for explaining can be found in the repository~\footref{repo}.
For the study, we created a CE from each counterfactual candidate that could be drawn from the learned concepts, the user ratings of the CEs selected by our algorithm with user ratings of other possible explanations.
To keep it consistent for the participants, we also presented 
explanations for the Family concepts for \texttt{Brother} and \texttt{Grandmother} as the corresponding concepts to \texttt{Sister} and \texttt{Grandfather} in the survey, even if ELTL did not correctly recognize these concepts and therefore our algorithm was not applied here.

\subsection{Setup of User Survey}

Using the generated concepts from the Family and Animals ontologies (see above) and their respective counterfactual candidates, we generated short stories of artificial intelligences classifying people in a family tree or animals and created a CE from each counterfactual candidate. We conducted an online survey via SoSciSurvey
in German. 
Participants were recruited through social networks and snowballing. 
At first, participants were informed about the content and goal of the survey and what CEs are, and later, the CEs were presented.

First, a scenario was described in which an AI would classify instances of family members or animals. Then, on every page, a classification made by an AI was presented in one sentence, followed by one or multiple sentences giving CEs for the classification, e.g. ``I would not have classified this animal as a turtle, if it did not have scales''. Within the two scenarios, classifications were presented in randomized order, one on each page. For the Family ontology, where each concept had led to two counterfactual candidates, the participants were randomly shown only one of the CEs. Because many explanations were quite similar (e.g. all concepts included counterfactual candidates referring to gender) it was made sure that they were presented mixed explanation types. Each explanation was accompanied by one item asking to rate on a scale from one to seven how helpful they perceived the explanation for understanding the decision of the program. For the animals scenario, participants were shown all counterfactual candidates (between two and five) at the same time, in random order, and presented the same rating scale for each of the explanations.

\subsection{Results of User Survey}

In the following, we present the results of our evaluation of our CE algorithm through a user survey.

\paragraph{Sample}
72 people took part in the survey. Age ranged between 20 and 69 (mean = 34.9, median = 32, standard deviation = 12.1, missing age data for one participant). 30 participants were female, 39 male and 3 diverse. Participants had mixed professions including both academic and non-academic ones, technical and non-technical.

\subsubsection{User ratings}
We used Wilcoxon signed-rank tests to calculate significance of deviation from the central value (4 on a scale from 1-7) of the rating of helpfulness for understanding for each CE. Tests were chosen given the fact that we use ad-hoc generated items, so we assume the ratings to be ordinal.
For all six concepts, users preferred the explanation that featured a role referring to relatives of the individual (\texttt{hasChild}, \texttt{hasSibling}) against the explanation that featured a $C_{i}$ referring to the individuals \texttt{gender} (all p$<$.001, except \texttt{Sister}: p$<$.01). These explanations (and only these) differed significantly from the central value (all p$<$.001).
For the Animaly ontology, we used Wilcoxon signed-rank tests as above (but for matched samples). For concepts with more than two counterfactuals, we used the Friedmann test. If present, participants always rated the CE mentioning an animal laying eggs as helpful (p$<$.01). Apart from that, decisions on this ontology showed no clear pattern, choosing (with p$<$.01) different features for different animals for explanation.

\begin{table}[tb]
\centering
\caption{Overview of alignment of algorithm decisions with user ratings as ground truth. Notation: T = True, F = False, P = Positive, N = Negative
}
\label{tab:eval}
\setlength{\tabcolsep}{9.5pt}
\begin{tabular}{@{}cccccc@{}}
  \toprule
  $l_{\mathrm{min}}$   & \textbf{TP} & \textbf{TN} & {\textbf{FP}} & \textbf{FN}  & \textbf{F1-Score}\\
  \midrule
  Family  & 4 & 4 &  0 & 0 & 1.0\\
  Animals & 3 & 6 & 5 & 7 & 0.33\\
\end{tabular}

\begin{tabular}{@{}cccccc@{}}
  \toprule
   $l_{\mathrm{mean}}$ & \textbf{TP} & \textbf{TN} & {\textbf{FP}} & \textbf{FN}  & \textbf{F1-Score}\\
  \midrule
  Family  &4 & 4 &  0 & 0 & 1.0\\
  Animals & 3 & 8 & 3 & 7 & 0.37 \\
  \bottomrule
\end{tabular}
\end{table}

\subsubsection{Comparison of algorithmic decisions with user ratings}
The match of our algorithms' decision with the participants' ratings can be seen in tablet~\ref{tab:eval}. We counted all CEs where participant ratings were significantly positive, so both algorithm and participants sometimes selected more than one explanation per scenario. Details on explanations selected by algorithm and participants can be found in our repository~\footref{repo}.

\subsubsection{Interpretation of results}
This study compared different CEs, i.e. explanations of the type ``what features of $x$ needed to be different for $x$ not to be a $C$?'' from user perspective. Overall, we suspect that participants preferred explanations referring to features that are rather unlikely and therefore more characteristic of the person or animal, though more studies would be needed here. In contrast, features which are very common (like being of a certain gender or having legs) were chosen less. This fits our likeliness measurement idea, since removing a feature that does not appear often in the population for candidate generation should also result in a rather high likeliness score using our definitions. Our algorithm partly manages to cover that, but could be improved. 

\section{Discussion}
\label{sec:discussion}


We showed that our approach can generate counterfactuals with minimal \emph{edit distance} measured by axiom additions and removals i.e. few features changes to the individual. Moreover, as there can be multiple counterfactuals per individual with minimal edit distance, we explored two \emph{likeliness} measures to choose among them.

Regarding the choice of the best counterfactual for an explanation, the results of our evaluation survey show room for improvement. In future work, a ``learning to rank'' algorithm might be used to automatically learn a likeliness measure.
Moreover,  $\mathcal{ELH}$ is a rather restricted DL. We plan to expand our algorithm to the more complex DL $\mathcal{ALC}$. The main challenges for $\mathcal{ALC}$ include negation and disjunctions. 

In $\mathcal{ALC}$, counterfactuals for $\langle C(x), add\rangle $ would differ further from abduction, since the ABox abduction does not return any explanation if $\mathcal K\models \neg C(x)$ but there might exist CFs obtained by removing the entailment of $\neg C(x)$ from $\mathcal K$.

\paragraph{Structure of ontologies}
A point open for discussion addresses how to deal with different structures of ontologies. 
As it is the case in the Family ontology, sometimes individuals might or might not have the same roles, while it is very unlikely for them to have the same objects (e.g. children) for these roles, which is affecting the likeliness measurement. Furthermore, it might depend on the ontology if sub-features of the changed feature should be counted into the edit distance, as we did. Thus, the usefulness of different distance calculation possibilities may depend on use cases and ontology structures.
\paragraph{Actionability}
While we tried to check for plausibility of counterfactual instances, our scoring mechanism cannot make sure yet that the axioms that were changed can actually be changed in the real world. One argument for CEs is that in many applications it might be interesting for data subjects to get to know how they can change their classification~\cite{Wachter2018Counterfactual}. However, the Family ontology shows an example of cases where this is not possible, since people cannot usually change their gender or relatives. Poyiadzi et al.~\cite{Poyiadzi2020FACE} discuss the relevance of actionability of counterfactuals. Our future work will put more focus on applications where actionability can be reached and how to do this.

\paragraph{Applications}
The Family ontology contains data about people's features and relations, as in DBpedia, YAGO, and Wikidata. CEs might be especially useful when humans are the classification target and demand a transparent justification of the result. To enrich KBs, additional information can be extracted from the web. Concept learning allows checking the consistency of the extracted information and inferring new (implicit) axioms from the explicitly stated axioms. Concept learning has been effectively applied to medical ontologies~\cite{Lehmann2010Concept}, but the learned concepts can become very long \cite{Kouagou2021Learning}, making them hard to grasp even for experts. Counterfactuals, which might even be verbalized in natural language, help to steer focus to the most important parts of the concept. Ultimately, we want to develop a chatbot that, in the spirit of XAI, provides users with natural language explanations of automatically learned concepts and can be applied to various use cases in areas including web science, medicine and finance. 

\enlargethispage{2\baselineskip}

\section{Conclusion}
We propose the first approach for generating CEs for concept assertions in the DL $\mathcal{ELH}$. Our approach performs well on the objective to generate counterfactual candidates which are similar to the individual. We discussed possibilities to improve the likeliness measurement of counterfactuals in accordance with findings from a user study. Our future work will move on to more complex DLs.

\bibliographystyle{ACM-Reference-Format}
\bibliography{bib}

\end{document}